\documentclass[runningheads]{llncs}


\usepackage{amsthm}
\newtheorem{thm}{Theorem}
\newtheorem{lem}{Lemma}

\usepackage{url}

\usepackage{xcolor}
\usepackage{amsmath}
\usepackage{graphicx}
\graphicspath{{figures/}}
\usepackage[pdf]{graphviz}
\usepackage{tikz}

\usepackage{adjustbox}
\usepackage{array}
\newcolumntype{R}[2]{%
    >{\adjustbox{angle=#1,lap=\width-(#2)}\bgroup}%
    l%
    <{\egroup}%
}
\newcommand*\rot{\multicolumn{1}{R{45}{1em}}}

\usepackage{algorithm}
\usepackage{algorithmicx}
\usepackage{algpseudocode}
\algblock{Input}{EndInput}
\algnotext{EndInput}
\algblock{Output}{EndOutput}
\algnotext{EndOutput}
\newcommand{\Desc}[2]{\State \makebox[2em][l]{#1}#2}

\let\llncssubparagraph\subparagraph
\let\subparagraph\paragraph
\usepackage{titlesec}
\let\subparagraph\llncssubparagraph

\titlespacing{\section}{4pt}{3pt plus 4pt minus 2pt}{2pt plus 2pt minus 2pt}
\titlespacing{\subsection}{3pt}{2pt plus 4pt minus 2pt}{2pt plus 2pt minus 2pt}
\setlength{\abovecaptionskip}{2pt}
\setlength{\belowcaptionskip}{-2pt}
\setlength{\intextsep}{.2\baselineskip}

%


\newcommand{\tool}{Proof Blocks}

\newcommand{\bigO}{\ensuremath{\mathcal{O}}}

\begin{document}
\title{Efficient Feedback  and Partial Credit Grading for \tool{} Problems}
%
%

\author{Seth Poulsen\orcidID{0000-0001-6284-9972}
\and
Shubhang Kulkarni\orcidID{0000-0002-1670-6011}
\and
Geoffrey Herman\orcidID{0000-0002-9501-2295}
\and
Matthew West\orcidID{0000-0002-7605-0050}
}
\authorrunning{S. Poulsen et al.}
\institute{University of Illinois Urbana-Champaign, Urbana, IL 61801, USA
\email{sethp3@illinois.edu}\\
}

\maketitle              

\begin{abstract}
\vspace{-2.5em}
\tool{} is a software tool that allows students to practice writing mathematical
proofs by dragging and dropping lines instead of writing proofs from scratch.
\tool{} offers the capability of assigning partial credit and providing
solution quality feedback to students. This is done by computing the
\emph{edit distance} from a student's submission to some predefined set of
\emph{solutions}.
In this work,
we propose an algorithm for the edit distance problem that significantly
outperforms the baseline procedure of exhaustively
enumerating over the entire search space. Our algorithm relies on a reduction to
the minimum vertex cover problem.
We benchmark our algorithm on thousands of student submissions from multiple
courses, showing that the baseline algorithm is
intractable, and that our proposed
algorithm is critical to enable classroom deployment.
Our new algorithm has also been used for problems in many other domains where
the solution space can be modeled as a DAG, including but not limited to
Parsons Problems for writing code, helping students understand packet ordering
in networking protocols, and helping students sketch solution steps for physics
problems. Integrated into multiple learning management systems, the algorithm
serves thousands of students each year.
\vspace{-1em}
\keywords{Mathematical proofs  \and Automated feedback \and Scaffolding.}
\end{abstract}

\vspace{-2em}
\section{Introduction}
Traditionally, classes that cover mathematical proofs expect students
to read proofs in a book, watch their instructor write proofs, and then write
proofs on their own. Students often find it difficult to jump to writing proofs on
their own, even when they have the required content
knowledge~\cite{weber2001student}.
Additionally, because proofs need to be graded manually, it often takes a while for
students to receive feedback on their work.

\tool{} is a software tool that allows students to construct a mathematical proof
by dragging and dropping instructor-provided lines of a proof instead of writing
from scratch (similar to Parsons
Problems~\cite{parsons2006parson} for writing code---see Figure~\ref{fig:example1}
for an example of the \tool{} user interface).
This tool scaffolds students' learning as they transition from reading proofs to
writing proofs while also providing instant machine-graded feedback.
To write a \tool{} problem, an instructor specifies lines of a proof and their
logical dependencies. The autograder accepts any ordering of the lines that
satisfies the dependencies.

\begin{figure*}
\centering
\resizebox{0.8\textwidth}{!}{%
\begin{tikzpicture}
\node[anchor=north west,inner sep=0] at (0,0)
{\includegraphics[width=\textwidth]{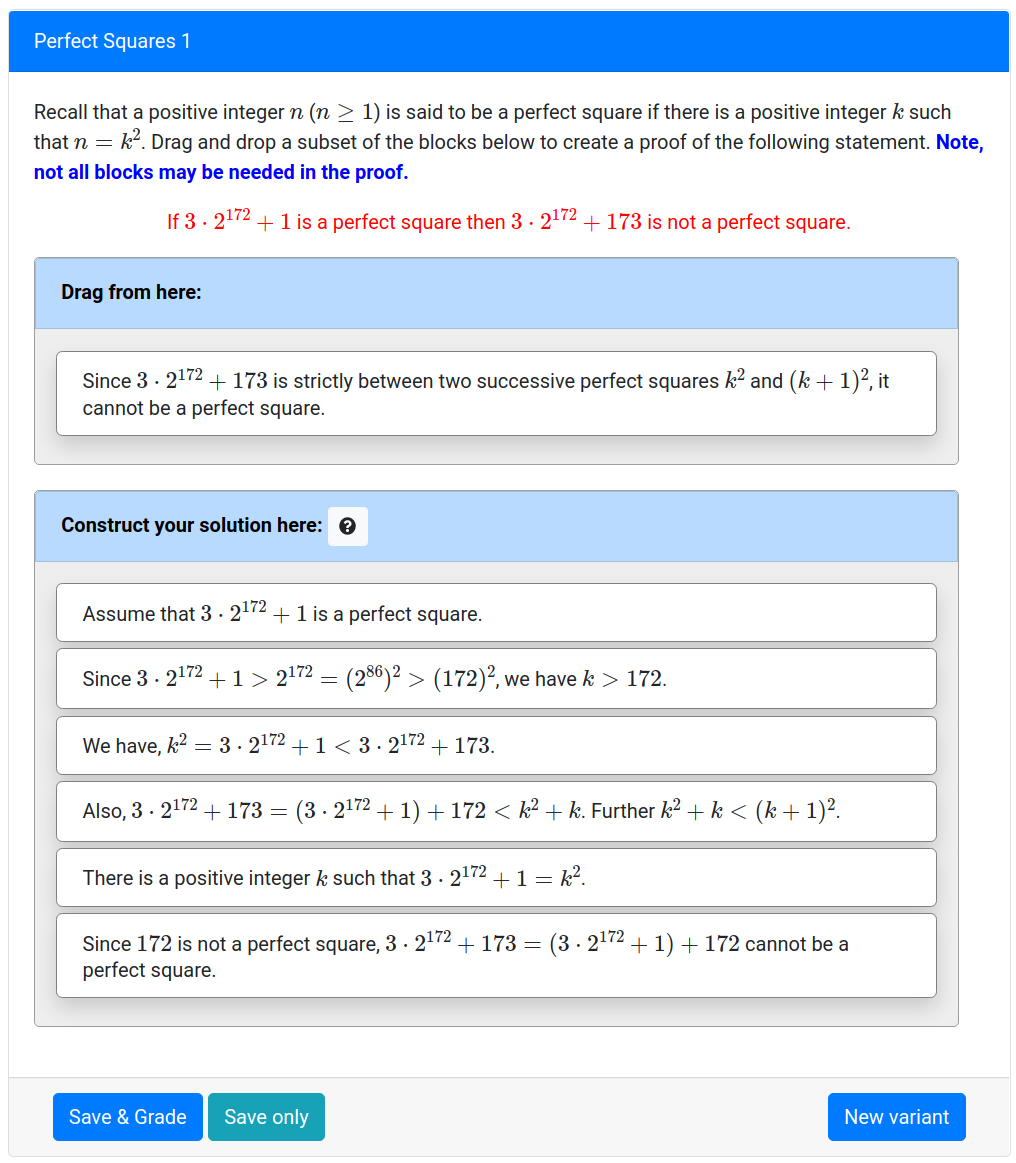}};
\node[shape=circle,draw=black,scale=1.2] (1) at (-0.25,-7.37) {1};
\node[shape=circle,draw=black,scale=1.2] (2) at (-0.25,-10.6) {2};
\node[shape=circle,draw=black,scale=1.2] (3) at (-0.25,-8.26) {3};
\node[shape=circle,draw=black,scale=1.2] (4) at (-0.25,-9.0) {4};
\node[shape=circle,draw=black,scale=1.2] (5) at (-0.25,-9.8) {5};
\node[shape=circle,draw=black,scale=1.2] (6) at (-0.25,-4.71) {6};
\node[shape=circle,draw=black,scale=1.2] (7) at (-0.25,-11.4) {7};
\end{tikzpicture}
}
 \hfill
\raisebox{0.6\height}{
\digraph[scale=0.42]{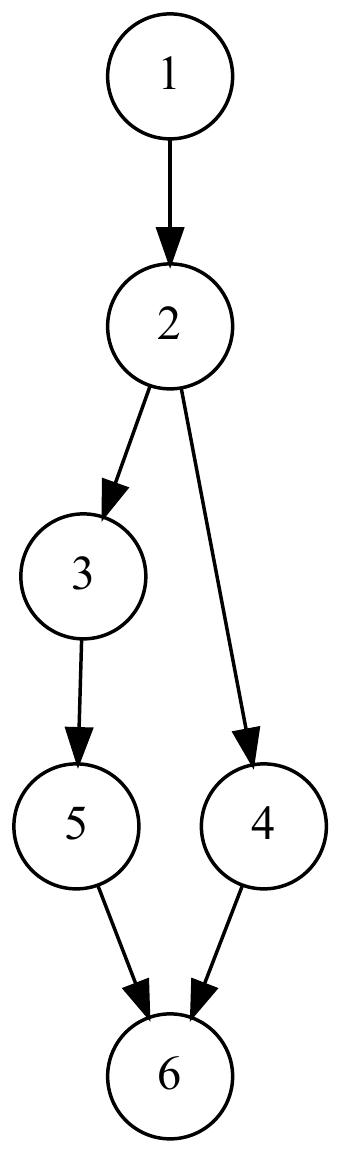}{
margin=0;
node [shape="circle"];
1 -> 2
2 -> 3
2 -> 4 -> 6
3 -> 5 -> 6
}
 }
\caption{
An example of the \tool{} student-user interface.
The instructor wrote the
problem with 1, 2, 3, 4, 5, 6 as the intended solution, but the \tool{} autograder
will also accept any other correct solution as determined by the dependency graph
shown. For example, both 1, 2, 4, 3, 5, 6 and 1, 2, 3, 5, 4, 6
would also be accepted as correct solutions.
Line 7 is a distractor that does not occur in any correct
solution.
\vspace{-1em}
}
\label{fig:example1}
\end{figure*}

Calculating the least edit distance from a student submission to some
correct solution solves two problems: (1) assigning students partial
credit (a key concern for students taking exams in a computerized
environment~\cite{apostolou2009student,darrah2010comparative}) and (2) giving
students instant feedback on their work, which can be a huge help for
students~\cite{anderson1995cognitive}. The baseline algorithm performance is not
sufficient to scale to provide immediate feedback to students in large classrooms,
necessitating a more efficient solution.
This paper makes the following contributions:
\begin{itemize}
\item An efficient algorithm for calculating the minimum edit distance from a
sequence to some topological ordering of a directed acyclic graph (DAG)
\item Application of this algorithm to grading and feedback for \tool{} problems
\item Mathematical proofs that the algorithm is correct, and has asymptotic
complexity that allows it to scale to
grading many student submissions at once, even for large problems
\item Benchmarking results on thousands of student submissions
showing that the efficient algorithm is needed to enabling the performance
necessary for classroom deployment
\end{itemize}

\section{Related Work}

\subsection{Software for Learning Mathematical Proofs}
Work in intelligent tutors for mathematical proofs goes back to work by John
Anderson and his colleagues on The Geometry Tutor
\cite{anderson1985geometry,anderson1995cognitive,koedinger1990abstract}. More
recently, researchers have created tutors for propositional logic, most notably Deep
Thought~\cite{mostafavi2015data,mostafavi2017evolution} and
LogEx~\cite{lodder2020providing-1,lodder2019comparison}.
The authors' prior work reviews other
software tools that provide visual user interfaces for constructing
proofs~\cite{poulsen2022proof}.

Most of these tools cover only a small subset of the material typically covered in
a discrete mathematics course, for example, only propositional logic. Those tools
that
are more flexible require learning complex theorem prover languages.
In contrast, \tool{} enables instructors to easily provide students with proof
questions on any topic by abstracting the content from the grading mechanism. The
downside of this is that \tool{} is not able to give students content-specific
hints as some of the other tools are.

\subsection{Edit Distance Based Grading and Feedback}
To our knowledge, no one has ever used edit distance based grading as a way of
providing feedback for mathematical proofs, but edit distance based grading
algorithms have been used in other contexts.

Chandra et al.~\cite{chandra2019automated} use edit distance to assign partial
credit to incorrect SQL queries submitted by students, using reference solutions
provided by the instructor.
Edit distance based methods, often backed by a database of known correct solutions,
have also been used to give feedback to students learning to program in general
purpose programming languages~\cite{paassen2018continuous,gulwani2018automated}
One difference between these and our method is that in programming contexts, the
solution space is very large, and so the methods work based on edit distance to some
known correct solution (manually provided by the instructor or other students).
Because we model mathematical proofs as DAGs, we are able to constrain the solution
space to be small enough that our algorithm can feasibly check the shortest edit to
\emph{any} correct solution.

Alur et al.~\cite{alur2013automated} provide a framework for automatically grading
problems where students must construct a deterministic finite automata (DFA). They
use edit distance for multiple purposes in their multi-modal grading scheme.





\section{Proof Blocks}
 Prior work has shown that Proof Blocks are
effective test questions, providing about as much information about student
knowledge as written proofs do~\cite{poulsen2021evaluating}, and also show promise
in helping students save time when learning to write
proofs~\cite{poulsen2023efficiency}.
To write a \tool{} problem, an instructor provides the proof lines and
the logical dependencies between the lines of the proof. These logical dependencies
form a DAG.
The autograder gives the
student points if their submission is a topological sort of the dependency graph.
On exams or during in-class activities, students are often given multiple tries to
solve a problem, so it is critical
that they receive their feedback quickly.
Additional details about the
instructor and student user interfaces, as well as best practices for using \tool{}
questions are given in a tool paper~\cite{poulsen2022proof}.

\tool{} is currently integrated
into both PrairieLearn~\cite{prairielearn} and Runestone
Interactive~\cite{miller2012beyond}. In PrairieLearn, students who submit their
work are shown their score and told the first line of their proof that isn't
logically supported by previous lines.
The Runestone implementation highlights certain lines of the student submission
that should be moved in order to get to a correct solution.
Research and discussion about which types of feedback are most helpful for student
learning are of crucial importance, but are beyond the scope of this paper, which
will focus on the technical details of the edit distance algorithm which enables
the construction of feedback.

Our algorithm assumes that each block is of equal weight for assigning partial
credit. The benefit of this is that the algorithm can assign partial credit for
any \tool{} problem without needing to know anything about the content of the
blocks, making it quicker and easier for instructors to write questions.
We have also had instructors using \tool{} express an interest in having the
ability to have some blocks weighted more than others in the grading. We leave
this to future work.

\section{The Edit Distance Algorithm}
Before defining our grading algorithms rigorously, it will first help to set up
some formalism about \tool{} problems. We then give the baseline version of
the algorithm, followed by an optimized version which is necessary for production
deployment.  For simplicity, our focus in communicating the algorithms will be on
calculating the edit distance, but the
proof of the correctness of the algorithm also explicitly constructs the edit
sequence used to give students feedback on how
to edit their submission into a correct solution.
\subsection{Mathematical Preliminaries}

\paragraph{Graph Theory.} Let $G = (V, E)$ be a DAG. Then a subset of vertices
$C\subseteq V$ is a \emph{vertex
cover} if every edge in $E$ is incident to some vertex in $C$. The \emph{minimum
vertex cover} (MVC) problem is the task of finding a vertex cover of minimum
cardinality. In defining our algorithms, we will assume the availability of a few
classical algorithms for graphs: $\textsc{AllTopologicalOrderings}(G)$ to return a
set containing all possible topological orderings of a graph
$G$~\cite{knuth1974structured}, $\textsc{ExistsPath}(G, u, v)$ returns a boolean
value to denote if there is a path from the node $u$ to the node $v$ in the graph
$G$, and $\textsc{MinimumVertexCover}(G)$ to return an MVC of a graph $G$ by
exhaustive search.

\vspace{-1em}
\paragraph{Edit Distance.}
For our purposes, we use the
\emph{Longest Common Subsequence} (LCS) edit distance, which only allows
deletion or
addition of items in the sequence (it does not allow substitution or transposition).
This edit distance is a good fit for our problem because it mimics the
affordances of the user interface of \tool{}.
Throughout the rest of the paper, we will simply use ``edit distance'' to refer to
the LCS edit distance. We denote the edit distance between two
sequences $S_1$ and $S_2$ as $d(S_1, S_2)$.
Formally defined, given two sequences $S_1, S_2$, the edit distance is the length of
the shortest possible sequence of operations that transforms $S_1$ into $S_2$, where
the operations are:
(1) \textbf{Deletion} of element $s_i$: changes the sequence $s_1, s_2,...s_{i-1},
s_i, s_{i+1}, ... s_n$ to the sequence $s_1, s_2,...s_{i-1},s_{i+1}, ... s_n$.
(2) \textbf{Insertion} of element $t$ after location $i$:  changes the sequence
$s_1, s_2,...s_i, s_{i+1}, ... s_n$ to the sequence $s_1, s_2,...s_i,t,s_{i+1}, ...
s_n$.
We assume the ability to compute the edit distance between two sequences in
quadratic time using the traditional dynamic programming
method~\cite{wagner1974string}.
We also identify a topological ordering $O$
of a graph $G$ with a sequence of nodes so that we can discuss the edit distance
between a topological ordering and another sequence $d(S, O)$.

\subsection{Problem Definition}
A \emph{\tool{} problem} $P = (C, G)$ is
a set of \emph{blocks} $C$ together with a DAG $G =
(V,E)$, which defines the logical structure of the proof. Both the blocks and
the graph are provided by the instructor who writes the question
(see~\cite{poulsen2022proof} for more details on question authoring). The set of
vertices $V$ of
the graph $G$ is a subset of the set of blocks $C$. Blocks which are in
$C$ but \emph{not} in $V$ are blocks which are not in any correct solution, and
we call these \emph{distractors}, a term which we borrow from the literature on
multiple-choice questions. A \emph{submission} $S = s_1, s_2, ...s_n$ is a sequence
of distinct blocks, usually constructed by a student who is
attempting to solve a \tool{} problem. If a submission $S$ is a topological
ordering of the graph $G$, we say that $S$ is a \emph{solution} to the \tool{}
problem $P$.

If a student submits a
submission $S$ to a \tool{} problem $P = (C, G)$, we want to assign partial
credit with the following properties:
(1) students get 100\% only if the submission is a solution
(2) partial credit declines  with the number of edits needed to convert
the submission into a solution
(3) the credit received is guaranteed to be in the range $0-100$.
To satisfy these desirable properties, we assign partial credit as follows:
$
\text{score} = 100 \times \frac{\max(0,|V| - d^*)}{|V|},
$
where $d^*$ is the minimum edit distance from the student submission to some
correct solution of $P$, that is:
$
d^* = \min \{d(S, O) \mid O \in \textsc{AllTopologicalOrderings}(G)\}.
$
This means, for example, that if a student's solution is 2 deletions and 1 insertion
(3 edits) away from a correct
solution, and the correct solution is 10 lines long, the student will receive 70\%.
If the edit distance is greater than the length of the solution, we simply assign
0.

\subsection{Baseline Algorithm}
The most straightforward approach to calculating partial credit as we define it is
to iterate over all topological orderings of $G$ and for each one, calculate the
edit
distance to the student submission $S$. We formalize this approach as Algorithm
\ref{alg:baseline}. While this is effective, this algorithm is computationally
expensive.

\begin{thm}
The time complexity of Algorithm \ref{alg:baseline} is $\bigO(m \cdot n
\cdot n!)$ in the worst case, where $n$ is the size of $G$ and $m$ is the length of
the student submission after distractors are discarded.
\end{thm}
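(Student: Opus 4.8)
The plan is to analyze Algorithm~\ref{alg:baseline} line-by-line and bound the cost of each component, then multiply through. The algorithm has essentially two nested pieces: an outer iteration over the set returned by $\textsc{AllTopologicalOrderings}(G)$, and an inner edit-distance computation $d(S, O)$ for each topological ordering $O$. So the first step is to bound the number of topological orderings of $G$: since $G$ has $n$ vertices, any topological ordering is in particular a permutation of those $n$ vertices, so there are at most $n!$ of them, and this bound is tight in the worst case (when $G$ has no edges, every permutation is a valid topological ordering). Hence the outer loop runs $\bigO(n!)$ times.

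Next I would account for the cost of each iteration. By the stated assumption, the edit distance between two sequences can be computed in quadratic time via the classical dynamic-programming method~\cite{wagner1974string}. Here one sequence is the (distractor-stripped) submission of length $m$ and the other is a topological ordering of length $n$, so each call to $d(S, O)$ costs $\bigO(m \cdot n)$. Tracking the running minimum over all iterations is $\bigO(1)$ per iteration and is dominated by the edit-distance call. Multiplying the per-iteration cost by the number of iterations gives $\bigO(m \cdot n \cdot n!)$, which is the claimed bound.

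One subtlety to address cleanly is the role of the distractors and the preprocessing step: before the main loop, the algorithm discards from $S$ any block not appearing in $V$, leaving a sequence of length $m \le n$. I would note that this filtering can be done in $\bigO(|S|)$ (or $\bigO(m \cdot |C|)$ with a naive membership test), which is negligible next to the $n!$ term, so it does not affect the asymptotic statement. A second point worth a sentence is that we are implicitly assuming $\textsc{AllTopologicalOrderings}(G)$ can enumerate its output with amortized polynomial delay per ordering~\cite{knuth1974structured}, so that merely generating the orderings is also dominated by the $\bigO(m \cdot n)$ edit-distance work done for each one; otherwise one should fold the generation cost in explicitly.

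I do not expect any real obstacle here — the statement is a routine worst-case accounting argument. The only thing requiring a moment of care is arguing that the $n!$ factor is genuinely attained in the worst case (not merely an upper bound), which is what justifies stating the complexity as $\bigO(m \cdot n \cdot n!)$ as a tight worst-case bound rather than something smaller; the empty-edge-set graph $G$ witnesses this, and for it $m = n$, so the bound $\bigO(n \cdot n \cdot n!)$ is realized. The whole proof is therefore a short paragraph once these pieces are assembled.
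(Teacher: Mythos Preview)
Your proposal is correct and follows essentially the same approach as the paper's proof: bound the number of topological orderings by $n!$ and multiply by the $\bigO(m\cdot n)$ cost of each edit-distance computation. The paper's proof is just a two-sentence version of your argument, omitting the discussion of tightness, preprocessing, and enumeration delay that you include.
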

\begin{proof}
The algorithm explicitly enumerates all $\bigO(n!)$ topological orderings of $G$.
For each ordering, the algorithm forms the associated block sequence, and computes
the edit distance to the student submission, requiring $\bigO(m \cdot n)$ time.
\end{proof}

\begin{algorithm}
\begin{algorithmic}[1]
\Input
\Desc{$S$}{The student submission being graded}
\Desc{$P$}{The \tool{} problem written by the instructor}
\EndInput
\Output
\Desc{$$}{The minimum number of edits needed to transform $S$ into a solution}
\EndOutput
\Procedure{GetMinimumEditDistance}{$S=s_1, s_2, ... s_\ell$, $P=(C,G)$}
\newline \emph{Brute force calculation of $d^*$:}
\State \Return $\min \{d(S, O) \mid O \in \textsc{AllTopologicalOrderings}(G)\}$
\EndProcedure
\end{algorithmic}
 \caption{Baseline Algorithm}\label{alg:baseline}
\end{algorithm}
\vskip -2em

\subsection{Optimized (MVC-based) implementation of Edit Distance Algorithm}
We now present a faster algorithm for calculating the \tool{} partial credit,
which reduces the problem to the \emph{minimum vertex cover} (MVC)
problem over a subset of the student's submission.
Rather than iterate over all topological orderings, this algorithm works by
manipulating the student's submission until it becomes a correct solution. In order
to do this, we define a few more terms.
We call a pair of blocks $(s_i, s_j)$ in a submission a
\emph{problematic pair} if line $s_j$ comes before line $s_i$ in the student
submission, but there is a path from $s_i$ to $s_j$ in $G$, meaning that $s_j$ must
come \emph{after} $s_i$ in any correct solution.

We define the \emph{problematic graph} to be the graph where the nodes are the
set of all blocks in a student submission that appear in some problematic pair,
and the edges are the problematic pairs.
We can then use the problematic graph to guide which blocks need to be
deleted from the student submission, and then we know that a simple series of
insertions will give us a topological ordering.
The full approach is shown in Algorithm \ref{alg:mvc}, and the proof of
Theorem~\ref{thm:equivalent} proves that this algorithm is correct.

\begin{algorithm}
\begin{algorithmic}[1]
\Input
\Desc{$S$}{The student submission being graded}
\Desc{$P$}{The \tool{} problem written by the instructor}
\EndInput
\Output
\Desc{$$}{The minimum number of edits needed to transform $S$ into a solution}
\EndOutput
\Procedure{GetMinimumEditDistance}{$S=s_1, s_2, ... s_\ell$, $P=(C,G)$}
\newline \emph{Construct the problematic graph:}
\State $E_0$ $\leftarrow$
	$\{(s_i, s_j) \mid i > j \text{ and } \textsc{ExistsPath}(G, s_i, s_j)\}$
\State $V_0$ $\leftarrow$
	$\{s_i \mid \text{there exists } j \text{ such that } (s_i, s_j) \in E_0 \text{
	or }
	(s_j, s_i) \in E_0
	\}$
\State $ \text{problematicGraph} \leftarrow (V_0, E_0)$ \vskip 1em
\hskip -2em \emph{Find number of insertions and deletions needed:}
\State $\text{mvcSize} \leftarrow |
\textsc{MinimumVertexCover}(\text{problematicGraph})|$ \label{alg:line:mvc}
\State numDistractors $\leftarrow$	$|\{s_i \in S \mid s_i \notin V\}|$
\State deletionsNeeded $\leftarrow$
 numDistractors + mvcSize \label{alg:line:deletions}
\State insertionsNeeded $\leftarrow$ $|V| -$ ($|S| - \text{deletionsNeeded}$)
\State \Return $\text{deletionsNeeded} + \text{insertionsNeeded}$
\EndProcedure
\end{algorithmic}
 \caption{Novel Algorithm using the MVC}\label{alg:mvc}
\end{algorithm}
\vskip -3em

\subsection{Worked example of Algorithm \ref{alg:mvc}}

For further clarity, we will now walk through a full example of executing Algorithm
\ref{alg:mvc} on a student submission.
Take, for example, the submission shown in Figure \ref{fig:example1}. In terms of
the block labels, this submission is $S = 1, 3, 4, 5, 2, 7$.
In this case, block $2$ occurs after blocks $3$, $4$, and $5$, but because
of the structure of the DAG, we know that it must come before all of those lines
in any correct solution. Therefore, the problematic graph in this case is
problematicGraph $= (\{2, 3, 4, 5\}, \{(2, 3), (2, 4), (2, 5)\})$. The minimum
vertex
cover here is $\{2\}$, because that is the smallest set which contains at least one
endpoint of each edge in the graph. Now we know that the number of deletions needed
is 1 + 1 = 2 (vertex cover of size one, plus one distractor line picked, see
Algorithm \ref{alg:mvc} line
\ref{alg:line:deletions}), and the
number of insertions needed is 2 (line 2 must be reinserted in the correct position
after being deleted, and line 6 must be inserted). This gives us a least edit
distance ($d^*$) of 4, and so the partial credit assigned would be
$
\text{score} = 100 \times \frac{\max(0,|V| - d^*)}{|V|} = 100
\times
\frac{6 - 4}{6}
\approx 33\%.
$

\subsection{Proving the Correctness of Algorithm \ref{alg:mvc} }
First we will show that the Algorithm constructs a feasible solution, and then we
will
show that it is minimal.

\begin{lem}[Feasability]
Given a submission $S = s_1, s_2, ...s_\ell$, there exists a sequence of edits
$\mathcal{E}$ from $S$ to some solution of $P$ such that
$|\mathcal{E}|$ is equal to $\textsc{GetMinimumEditDistance}(S, P)$ as computed
by Algorithm~\ref{alg:mvc}.
\label{lem:feasability}
\end{lem}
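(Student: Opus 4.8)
The plan is to show that Algorithm~\ref{alg:mvc} produces a number $d$ that is achievable as the length of an actual edit sequence from $S$ to a topological ordering of $G$. I would construct the edit sequence explicitly in three stages: first delete all distractors (blocks of $S$ not in $V$); second, delete a minimum vertex cover of the problematic graph; third, insert the remaining missing blocks in appropriate positions. The count of these operations must then be shown to equal $\textsc{numDistractors} + \textsc{mvcSize} + (|V| - (|S| - \textsc{deletionsNeeded}))$, which matches what the algorithm returns.

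The key technical claim is that after deleting the distractors and a vertex cover $C$ of the problematic graph, the \emph{remaining subsequence} of $S$ (call it $S'$) contains no problematic pair, i.e.\ the relative order of the surviving blocks is consistent with $G$. First I would argue this: if $(s_i,s_j)$ were still a problematic pair in $S'$, then both $s_i$ and $s_j$ survive, so neither is a distractor and neither lies in $C$; but $(s_i,s_j)$ is an edge of the problematic graph, so $C$ must cover it, a contradiction. Hence $S'$ is a subsequence of $V$ whose order violates no path constraint of $G$. Then I would invoke the standard fact that any such order-consistent subset of the vertices of a DAG can be extended to a full topological ordering by inserting the missing vertices one at a time in the right places (this is where $\textsc{ExistsPath}$ and the DAG structure matter): concretely, take any topological order of $G$ and interleave the missing $|V| - |S'|$ vertices into $S'$ in a way that respects that order, which is possible precisely because $S'$ is already consistent. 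This yields the insertions, and $|S'| = |S| - \textsc{deletionsNeeded}$, so the number of insertions is exactly $|V| - (|S| - \textsc{deletionsNeeded})$ as in the algorithm.

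Finally I would check the bookkeeping: total edits $=$ (number of distractor deletions) $+$ (size of the vertex cover deletions) $+$ (number of insertions) $= \textsc{numDistractors} + \textsc{mvcSize} + (|V| - (|S| - \textsc{deletionsNeeded}))$, which is literally the return value. I should also note that insertionsNeeded is nonnegative: since $S'$ is order-consistent with $G$ and consists of distinct blocks of $V$, we have $|S'| \le |V|$, so $|V| - |S'| \ge 0$; without this the "sequence of edits" would not make sense.

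The main obstacle I anticipate is the middle step: rigorously proving that an order-consistent subsequence of a DAG's vertices extends to a topological ordering using only insertions, and doing so in a way that makes the insertion count exactly $|V|-|S'|$ rather than something larger. One must be careful that inserting a missing vertex doesn't force reordering of already-placed vertices; the cleanest way is probably to fix a topological order $O$ of $G$ in advance, observe that $S'$ appears as a (not necessarily contiguous) subsequence of $O$ because $S'$ is order-consistent, and then define the target solution to be $O$ itself — the insertions are then exactly the elements of $O \setminus S'$, inserted at their positions in $O$. Establishing that "$S'$ order-consistent with $G$" implies "$S'$ is a subsequence of some topological order $O$" is the crux, and I would prove it by a greedy/inductive argument on $|V \setminus S'|$ or by appealing directly to the correctness of $\textsc{AllTopologicalOrderings}$.
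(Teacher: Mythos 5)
Your construction is correct and follows the paper's proof in its overall structure: delete the distractors and an MVC of the problematic graph, observe that the surviving subsequence $S'$ has no problematic pair (exactly your vertex-cover argument), extend $S'$ to a topological ordering of $G$ by insertions alone, and check that the operation count equals the algorithm's return value. The one place where you diverge is the step you yourself flag as the crux---that an order-consistent $S'$ is a subsequence of some topological ordering of $G$. The paper settles this with a small trick: add to $G$ the edges $(s'_i, s'_{i+1})$ between consecutive surviving blocks; since $S'$ has no problematic pairs, these edges create no cycle, so the augmented graph $G'$ is a DAG, and any topological ordering of $G'$ is simultaneously a topological ordering of $G$ and a supersequence of $S'$. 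Your proposed greedy/inductive alternative also works (insert each missing vertex $v$ after every block of $S'$ with a path to $v$ and before every block of $S'$ reachable from $v$; these two sets cannot interleave, else $S'$ would contain a problematic pair), but as written it is only a sketch, and your fallback of ``appealing directly to the correctness of $\textsc{AllTopologicalOrderings}$'' would not by itself establish the claim---that routine only enumerates topological orderings, it says nothing about which sequences embed into one. So either carry out the insertion induction explicitly or adopt the edge-augmentation device; with that step made precise, your bookkeeping (including the nonnegativity check on insertionsNeeded, which the paper leaves implicit) completes the proof.
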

\vspace{-1em}
\begin{proof}
Given the MVC computed on line~\ref{alg:line:mvc} of Algorithm \ref{alg:mvc}, delete
all blocks
in the MVC from $S$, as well as all distractors in $S$, and call this new submission
$S'$. Now $S'$ is a submission such that it contains no distractors, and its
problematic graph is empty.

Now, for all $i$ where $1 \leq i < \ell$, add the edge $(s_i, s_{i+1})$ to the
graph
$G$, and call this new graph $G'$. Because there are no problematic pairs in
$S'$,
we know that adding these new edges does not introduce any new cycles, so $G'$ is a
DAG.
Now, a topological ordering $O$ of the graph $G'$ will be a topological ordering of
$G$
with the added constraint that all blocks which appeared in the submission $S'$
are still in the same order with respect to one	 another. Then, since there are no
distractors in $S'$, $S'$ will be a subsequence of $O$.
Thus, we can construct $O$ simply by adding blocks to $S$.
The length of this sequence $\mathcal{E}$ is exactly what Algorithm~\ref{alg:mvc}
computes.

\end{proof}

\vspace{-1.5em}
\begin{lem}[Minimality]
Let $E'$ be any edit from the submission $S$ to some correct solution of $P$.
Then the length of $E'$ is greater than or equal to the output of Algorithm
\ref{alg:mvc}.
\label{lem:minimality}
\end{lem}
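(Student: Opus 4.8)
The plan is to first rewrite the value returned by Algorithm~\ref{alg:mvc} in closed form, and then argue that no edit sequence can do better. Abbreviate by $k$ the quantity numDistractors and by $v$ the quantity mvcSize computed in Algorithm~\ref{alg:mvc}. A short computation with lines~\ref{alg:line:mvc}--\ref{alg:line:deletions} and the line that follows shows that the returned value equals
\[
(k+v) + \bigl(|V| - (|S| - (k+v))\bigr) \;=\; 2(k+v) + |V| - |S|.
\]
So it suffices to show $|E'| \ge 2(k+v) + |V| - |S|$ for every edit $E'$ from $S$ to a solution $O$ of $P$.

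Next I would reduce this to a statement about deletions only. Each deletion shortens a sequence by one and each insertion lengthens it by one, so an edit taking $S$ (length $|S|$) to $O$ (length $|V|$, since $O$ is a topological ordering of $G$) that performs $D$ deletions must perform exactly $D + |V| - |S|$ insertions; hence $|E'| = 2D + |V| - |S|$, and it is enough to prove $D \ge k + v$. Moreover, the blocks of $S$ that $E'$ never deletes survive, in their original relative order, all the way into $O$, so they form a subsequence common to $S$ and $O$; there are at least $|S| - D$ of them, so $D$ is at least the minimum number of blocks that must be removed from $S$ to leave a subsequence of $O$. Let $X$ be such a minimum removal set, so $D \ge |X|$ and it remains to show $|X| \ge k + v$.

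To bound $|X|$ I would split it according to distractors. First, every distractor of $S$ must lie in $X$: a distractor is a block of $C \setminus V$, whereas $O$ contains only vertices of $V$, so a distractor not removed could not occur in a subsequence of $O$. That contributes $k$ elements to $X$, and none of them is a vertex of the problematic graph, since every vertex of that graph is an endpoint of a problematic pair and hence, by the \textsc{ExistsPath} condition, a vertex of $G$, i.e.\ an element of $V$. Second, $X$ minus these distractors is a vertex cover of the problematic graph: if $(s_i,s_j)$ is a problematic pair with neither $s_i$ nor $s_j$ in $X$, then both survive into the common subsequence with $s_j$ still before $s_i$ (deletions preserve order and $j<i$ in $S$), yet the path from $s_i$ to $s_j$ in $G$ forces $s_i$ before $s_j$ in the topological ordering $O$, contradicting that this subsequence embeds into $O$. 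Hence the non-distractor part of $X$ has size at least $v$, and being disjoint from the $k$ distractors we conclude $|X| \ge k+v$, which finishes the argument.

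The step I expect to be the main obstacle is the bookkeeping in the second paragraph: making rigorous that an arbitrary, possibly wasteful edit sequence can be charged against a single clean deletion set $X$, and then verifying in the third paragraph that the distractor count and the vertex-cover count genuinely add rather than overlap. The order-reversal argument for the vertex cover is short and is the conceptual core; combined with Lemma~\ref{lem:feasability} it also yields that Algorithm~\ref{alg:mvc} computes $d^*$ exactly.
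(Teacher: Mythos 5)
Your proposal is correct and takes essentially the same route as the paper's proof: in both, the core of the argument is that the blocks deleted by any edit must include every distractor and must form a vertex cover of the problematic graph, while the number of insertions is forced by the number of deletions together with the length difference $|V|-|S|$. The only difference is presentational---you compare $|E'|$ against the closed-form output $2(k+v)+|V|-|S|$ rather than against the specific edit sequence $E$ of Lemma~\ref{lem:feasability}, and you make the bookkeeping (charging deletions to a common-subsequence removal set, and the disjointness of distractors from problematic-graph vertices) more explicit than the paper does.
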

\begin{proof}
Let $E$ be the edit sequence constructed in Lemma~\ref{lem:feasability}.
We will show that the number of deletions and the number of insertions in $E'$ is
greater than or equal to the number of deletions and insertions in $E$.

If
there is any problematic pair $(s_i, s_j)$ in the student
submission, one of $s_i$ and $s_j$ must be deleted from the submission to reach a
solution. Because there is no
substitution or transposition allowed, and because each block may only occur
once in a sequence, there is no other way for the student
submission to be transformed into some correct solution unless $s_i$ or $s_j$ is
deleted and then re-inserted in a different position.

Therefore, the set of blocks deleted in the edit sequence $E'$ must be a vertex
cover of the problematic graph related to $S$
in $E$ and we delete only the blocks in the minimum vertex cover of the
problematic graph. In both cases, all distractors must be deleted. So, the number
of deletions in $E'$ is greater than or equal to the number of deletions in $E$.

The number of insertions in any edit sequence must be the number of deletions, plus
the difference between the length of the submission and the size of the graph, so
that the final solution will be the correct length. Then since the number of
deletions in $E'$ is at least as many as there are in $E$, the number of insertions
in $E'$ is also at least as many as the number of insertions in $E$.

Combining what we have shown about insertions and deletions, we have that $E'$ is at
least as long of an edit sequence as $E$.

\end{proof}

\vspace{-1.5em}
\begin{thm}
Algorithm \ref{alg:mvc} computes  $d^*$---the minimum edit distance from the
submission $S$
to some topological ordering of $G$---in $\bigO(m^2 \cdot 2^m)$ time,
where $m$ is the length of the student submission after distractors are
discarded.
\label{thm:equivalent}
\end{thm}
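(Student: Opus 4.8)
The plan is to split the statement into its two halves: correctness (the algorithm returns $d^*$) and running time ($\bigO(m^2 \cdot 2^m)$). Correctness comes almost for free from the two lemmas already established. Since $G$ is a DAG it admits at least one topological ordering, so $\textsc{AllTopologicalOrderings}(G)$ is a nonempty finite set and $d^*$ is well defined. Lemma~\ref{lem:feasability} produces an edit sequence from $S$ to an actual solution of $P$ whose length equals $\textsc{GetMinimumEditDistance}(S,P)$, which gives $d^* \le \textsc{GetMinimumEditDistance}(S,P)$. Lemma~\ref{lem:minimality}, applied to an edit sequence $E'$ that realizes the minimum $d^*$, gives the reverse inequality $d^* = |E'| \ge \textsc{GetMinimumEditDistance}(S,P)$. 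Combining the two inequalities yields that the algorithm's output is exactly $d^*$.

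For the running time I would go through Algorithm~\ref{alg:mvc} line by line and show that every step except the call to \textsc{MinimumVertexCover} runs in time polynomial in the input size, and then bound that one call by $\bigO(m^2 \cdot 2^m)$. Constructing $E_0$ examines $\bigO(\ell^2)$ ordered pairs $(s_i, s_j)$ and performs one reachability query per pair, which is polynomial (e.g.\ after a single transitive-closure precomputation on $G$); forming $V_0$, counting distractors, and the arithmetic from line~\ref{alg:line:deletions} onward are likewise polynomial, and one checks along the way that $\text{insertionsNeeded}$ is nonnegative so the returned value is the honest length of the edit sequence. The crucial observation is that the problematic graph is small: any problematic pair $(s_i, s_j)$ requires a path from $s_i$ to $s_j$ in $G$, so both endpoints lie in $V$ and are therefore non-distractor blocks, whence $|V_0| \le m$. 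Exhaustive minimum vertex cover then enumerates all $2^{|V_0|} \le 2^m$ subsets of $V_0$, and testing whether a subset covers every edge costs $\bigO(|V_0|^2) = \bigO(m^2)$. This $\bigO(m^2 \cdot 2^m)$ term dominates the polynomial cost of the other steps, giving the stated bound.

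The only nontrivial point — and thus the main obstacle — is pinning the exponential factor to $m$ rather than to the raw submission length $\ell$, which could be considerably larger when the student includes many distractors. This rests entirely on the structural fact that a distractor has no incident edges in $G$ at all and so can never participate in a problematic pair; distractors therefore contribute only cheap linear-time deletions (counted in \texttt{numDistractors}) and never enter the vertex cover subproblem. Once this is in place, the complexity analysis is routine bookkeeping over the pair loop and the brute-force search, and correctness is an immediate corollary of Lemmas~\ref{lem:feasability} and~\ref{lem:minimality}.
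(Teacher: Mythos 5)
Your proposal is correct and follows essentially the same route as the paper: correctness is obtained by combining Lemma~\ref{lem:feasability} and Lemma~\ref{lem:minimality}, and the runtime bound comes from observing that the polynomial-time construction of the problematic graph is dominated by the brute-force MVC computation, which costs $\bigO(m^2 \cdot 2^m)$. Your explicit justification that distractors cannot appear in problematic pairs (so the problematic graph has at most $m$ vertices) is a detail the paper leaves implicit, but it is the same argument.
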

\begin{proof}
The correctness of the algorithm is given by combining Lemma \ref{lem:feasability}
and Lemma \ref{lem:minimality}.
To see the time complexity, consider that constructing the problematic graph
requires using a breadth first search
to check for the existence of a path between each block and all of the blocks
which precede it in the submission $S$, which can be completed in polynomial
time. Na\"ively computing the MVC of the problematic graph has
time complexity $\bigO(m^2 \cdot 2^m)$. Asymptotically, the calculation of the MVC
will dominate the calculation of the problematic graph, giving an overall time
complexity of $\bigO(m^2 \cdot 2^m)$.
\end{proof}

\begin{remark}
The complexity of Algorithm \ref{alg:mvc} is due to the brute force computation of
the MVC, however, there exists a a $\bigO(1.2738^k + kn)$-time \emph{fixed
parameter
tractable} (FPT) algorithm where $k$ is the size of the minimum vertex cover
\cite{Chen-Kanj-Xia-2010}. While we focus in this paper on the brute force MVC
method since it is sufficient for our needs, using the FPT method may give further
speedup, especially considering the often small size of $k$ in real use cases
(see Table~\ref{tab:comparison}).
\end{remark}

\tool{} also supports a feature known as \emph{block groups} which enables \tool{}
to handle proof by cases using an extended version of Algorithm~\ref{alg:mvc}.
Full details of the extended version of Algorithm~\ref{alg:mvc} and proofs of its
correctness can be seen in the first author's dissertation~\cite{poulsen2023proof}.
In the benchmarking results shown below in Table~\ref{tab:comparison}, problems 2,
4, 6, 9, and 10 make use of this extended version.

\section{Benchmarking Algorithms on Student Data}\label{sec:Benchmarking-Algorithms-on-Student-Data}

\subsection{Data Collection}
We collected data from homework, practice tests, and exams from Discrete
Mathematics courses taught in the computer science departments at the University
of Illinois Urbana-Champaign and the University of Chicago. Problems covered
topics including number theory, cardinality,
functions, graphs, algorithm analysis, and probability theory.
Some questions only appeared on optional practice exams, while others appeared
on exams. Also, more difficult questions received more incorrect submissions as
students were often given multiple tries. This explains the
large discrepancy between the number of submissions to certain questions seen in
Table~\ref{tab:comparison}. In total, our benchmark set includes 7,427 submissions
to 42 different \tool{} problems.

\subsection{Benchmarking Details}
All benchmarking was done in Python. For Algorithm \ref{alg:baseline}, we used the
NetworkX library~\cite{hagberg2008exploring} to generate all topological orderings
of $G$
and used the standard dynamic programming algorithm for LCS edit distance to
calculate the edit distance between each submission and each topological ordering.
Our implementation of Algorithm~\ref{alg:mvc} also used NetworkX to store the
graph, and then found the MVC using
the na\"ive method of iterating over all subsets of the graph, starting from the
smallest to the largest, until finding one which is a vertex cover.
Benchmarks were run on an Intel i5-8530U CPU with 16GB of RAM.
The implementation of Algorithm~\ref{alg:mvc} used for benchmarking is the
implementation used in production in PrairieLearn~\cite{orderblocksdocs}.
Runestone Academy uses an
alternate implementation in JavaScript.

\subsection{Results}
Figure~\ref{fig:sizeandorderingsvstime}
\begin{figure*}[t]
\centering
\includegraphics[width=.95\textwidth]{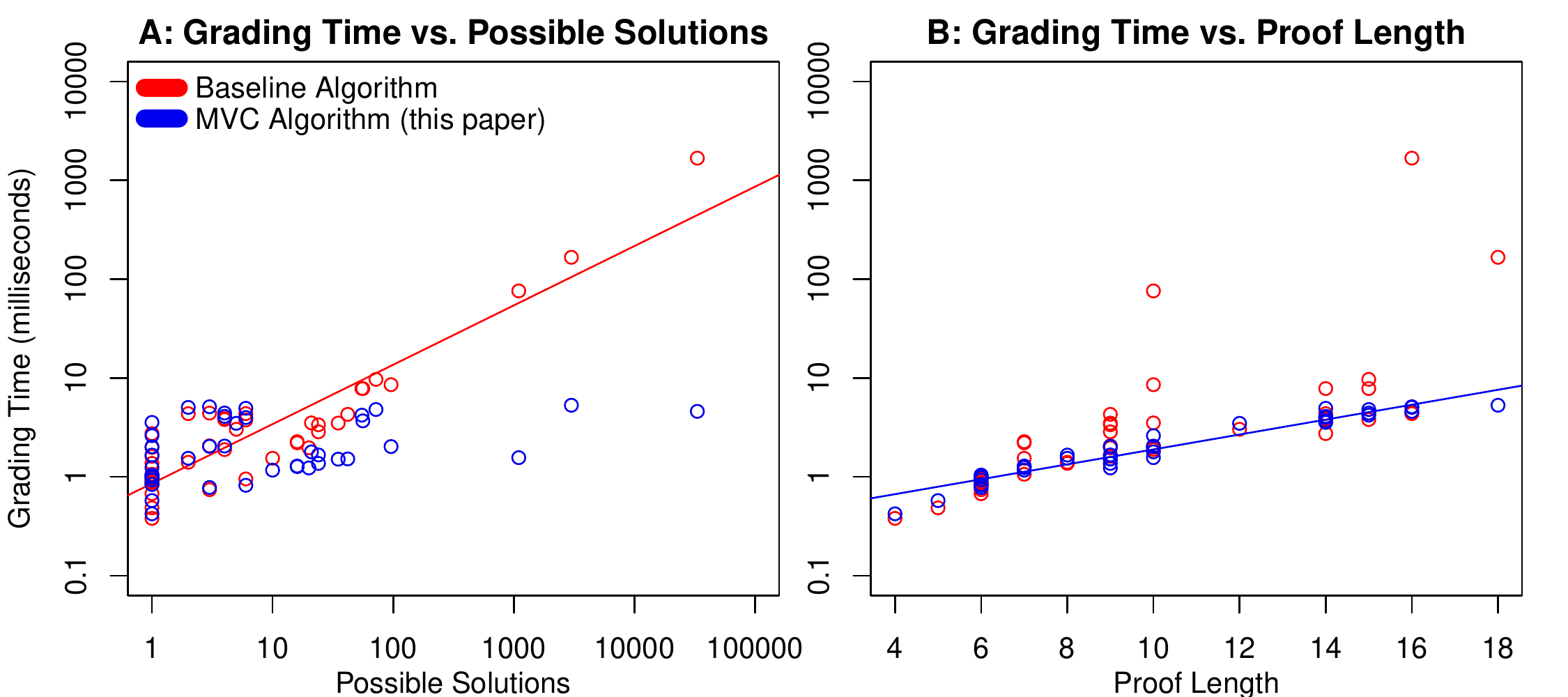}
\caption{
Comparison of grading time for the two grading algorithms.
Subplot (A) is a log-log plot showing that the baseline algorithm
scales with the number of possible solutions,
Subplot (B) is a log-linear plot showing that the MVC Algorithm runtime scales
with the length of the proof. This is a
critical difference, because the number of topological orderings of a DAG can be
$n!$ for a graph with $n$ nodes.
}
\label{fig:sizeandorderingsvstime}
\vspace{-1em}
\end{figure*}
shows the algorithm run time of our
novel MVC-based algorithm (Algorithm~\ref{alg:mvc}) and the baseline algorithm
(Algorithm~\ref{alg:baseline})
for all of the problems in our benchmark set, compared both to the
number of possible solutions (A) and the Proof Length (B).
This demonstrates that the theoretical algorithm run-times hold empirically:
the run time of Algorithm~\ref{alg:baseline}
scales exponentially with the number of topological orderings of the proof graph
(A), and the run time of Algorithm \ref{alg:mvc} scales
exponentially with the length of the proof (B). This is a
critical difference, because the number of topological orderings of a DAG can be
$n!$ for a graph with $n$ nodes.
Thus, a relatively short \tool{} problem could have
a very long grading time with Algorithm~\ref{alg:baseline}, while with
Algorithm~\ref{alg:mvc}, we can guarantee a tight bound on grading time given the
problem
size.

Algorithm 1 performed about twice as fast as Algorithm 2 when there was one
solution---more trivial cases when both Algorithms took less than a
millisecond. Performance was comparable for problems with between 2 and 10
possible solutions
Algorithm 2 performed significantly faster for all problems with over 10 possible
solutions ($p<0.001$).

\setlength{\tabcolsep}{6pt}
\begin{table}[t]
\centering
\begin{tabular}{r|rrr|rrrr|rrr}
  \rot{ Question Number }
& \rot{ Proof Length }
& \rot{ Possible Solutions }
& \rot{ Distractors }
& \rot{ Submissions }
& \rot{ Submission size (mean) }
& \rot{ Prob. Graph Size (mean) }
& \rot{ MVC Size (mean) }
& \rot{ Baseline Alg. Time (mean ms) }
& \rot{ MVC Alg. Time (mean ms) }
& \rot{ Speedup Factor }
\\ \hline
1 & 9 & 24 & 0 & 529 & 8.9 & 3.3 & 1.6 & 3.36 (0.90) & 1.66 (0.50) & 2.0 \\
2 & 9 & 35 & 5 & 376 & 8.5 & 1.2 & 0.4 & 3.50 (0.63) & 1.51 (0.28) & 2.3 \\
3 & 9 & 42 & 0 & 13 & 8.8 & 1.5 & 0.7 & 4.29 (0.40) & 1.52 (0.15) & 2.8 \\
4 & 15 & 55 & 0 & 29 & 6.9 & 2.2 & 0.4 & 7.83 (1.39) & 4.21 (0.75) & 1.9 \\
5 & 14 & 56 & 0 & 324 & 6.4 & 1.8 & 0.6 & 7.82 (1.32) & 3.68 (0.98) & 2.1 \\
6 & 15 & 72 & 0 & 260 & 7.4 & 1.6 & 0.5 & 9.69 (2.20) & 4.80 (1.27) & 2.0 \\
7 & 10 & 96 & 0 & 145 & 8.3 & 2.5 & 1.0 & 8.56 (1.27) & 2.02 (0.48) & 4.2 \\
8 & 10 & 1100 & 0 & 616 & 9.4 & 3.1 & 1.3 & 76.05 (8.78) & 1.56 (0.21) & 48.7 \\
9 & 18 & 3003 & 0 & 253 & 8.2 & 1.6 & 0.5 & 166.4 (15.9) & 5.30 (1.05) & 31.4 \\
10 & 16 & 33264 & 0 & 97 & 4.9 & 1.2 & 0.4 & 1676.0 (235.) & 4.60 (1.43) & 364.8 \\
\hline
\end{tabular}
\caption{
Performance of baseline vs. MVC algorithm for 10 problems with the most
topological sorts. For all problems shown, speedup was statistically significant
at $p<0.001$. Numbers in parentheses are standard errors.}
\label{tab:comparison}
\vspace{-2em}
\end{table}
Table~\ref{tab:comparison} gives further benchmarking details for the 10 questions
from the data set with the greatest number of possible solutions (the others are
omitted due to space constraints).
These results show that Algorithm \ref{alg:mvc} is far superior in performance.
The mean
time of 1.7 seconds for the most complex \tool{} problem under Algorithm
\ref{alg:baseline} may not
seem computationally
expensive, but it does not scale to having hundreds of students working on an
active learning activity, or taking exams at the
same time, all needing to receive rapid feedback on their \tool{} problem
submissions.
Furthermore, this grading time could easily be 10 or even 100 times
longer per question if the DAG for the question was altered by even a single edge.

\section{Conclusions and Future Work}
In this paper, we have presented a novel algorithm for calculating the edit
distance from a student submission to a correct solution to a \tool{} problem.
This information can then be used to give students feedback and calculate grades.
This algorithm can also be used with Parsons Problems, task
planning problems, or any other type of problem where the solution space can be
modeled as a DAG. We showed with student data that
our algorithm far outperforms the baseline algorithm, allowing us to give
students immediate feedback as they work through exams and homework.
Now deployed in dozens of classrooms across multiple universities, this
algorithm benefits thousands of students per semester.

\subsubsection*{Acknowledgements.}
We would like to thank Mahesh Viswanathan, Benjamin Cosman, Patrick Lin, and Tim
Ng for using \tool{} in their Discrete Math courses and allowing us to use
anonymized data from their courses for the included benchmarks.

\bibliographystyle{splncs04}
\bibliography{references}

\begin{thebibliography}{10}
\providecommand{\url}[1]{\texttt{#1}}
\providecommand{\urlprefix}{URL }
\providecommand{\doi}[1]{https://doi.org/#1}

\bibitem{orderblocksdocs}
pl-order-blocks documentation (2021),
  \url{https://prairielearn.readthedocs.io/en/latest/elements/#pl-order-blocks-element},
  accessed: June 2021

\bibitem{alur2013automated}
Alur, R., D'Antoni, L., Gulwani, S., Kini, D., Viswanathan, M.: Automated
  grading of dfa constructions. In: Twenty-Third International Joint Conference
  on Artificial Intelligence (2013)

\bibitem{anderson1995cognitive}
Anderson, J.R., Corbett, A.T., Koedinger, K.R., Pelletier, R.: Cognitive
  tutors: Lessons learned. The journal of the learning sciences  \textbf{4}(2),
   167--207 (1995)

\bibitem{anderson1985geometry}
Anderson, J., Boyle, C., Yost, G.: The geometry tutor, proc. of 9th
  international joint conference on artificial intelligence (1985)

\bibitem{apostolou2009student}
Apostolou, B., Blue, M.A., Daigle, R.J.: Student perceptions about computerized
  testing in introductory managerial accounting. Journal of Accounting
  Education  \textbf{27}(2),  59--70 (2009).
  \doi{https://doi.org/10.1016/j.jaccedu.2010.02.003}

\bibitem{chandra2019automated}
Chandra, B., Banerjee, A., Hazra, U., Joseph, M., Sudarshan, S.: Automated
  {Grading} of {SQL} {Queries}. In: 2019 {IEEE} 35th {International}
  {Conference} on {Data} {Engineering} ({ICDE}). pp. 1630--1633 (Apr 2019).
  \doi{10.1109/ICDE.2019.00159}, iSSN: 2375-026X

\bibitem{Chen-Kanj-Xia-2010}
Chen, J., Kanj, I.A., Xia, G.: Improved upper bounds for vertex cover.
  Theoretical Computer Science  \textbf{411}(40),  3736--3756 (2010).
  \doi{https://doi.org/10.1016/j.tcs.2010.06.026}

\bibitem{darrah2010comparative}
Darrah, M., Fuller, E., Miller, D.: A comparative study of partial credit
  assessment and computer-based testing for mathematics. Journal of Computers
  in Mathematics and Science Teaching  \textbf{29}(4),  373--398 (2010)

\bibitem{gulwani2018automated}
Gulwani, S., Radiček, I., Zuleger, F.: Automated {Clustering} and {Program}
  {Repair} for {Introductory} {Programming} {Assignments} p.~16 (2018)

\bibitem{hagberg2008exploring}
Hagberg, A.A., Schult, D.A., Swart, P.J.: Exploring network structure,
  dynamics, and function using networkx. In: Varoquaux, G., Vaught, T.,
  Millman, J. (eds.) Proceedings of the 7th Python in Science Conference. pp.
  11 -- 15. Pasadena, CA USA (2008)

\bibitem{knuth1974structured}
Knuth, D.E., Szwarcfiter, J.L.: A structured program to generate all
  topological sorting arrangements. Information Processing Letters
  \textbf{2}(6),  153--157 (1974)

\bibitem{koedinger1990abstract}
Koedinger, K.R., Anderson, J.R.: Abstract planning and perceptual chunks:
  Elements of expertise in geometry. Cognitive Science  \textbf{14}(4),
  511--550 (1990)

\bibitem{lodder2019comparison}
Lodder, J., Heeren, B., Jeuring, J.: A comparison of elaborated and restricted
  feedback in {LogEx}, a tool for teaching rewriting logical formulae. Journal
  of Computer Assisted Learning  \textbf{35}(5),  620--632 (2019).
  \doi{10.1111/jcal.12365}

\bibitem{lodder2020providing-1}
Lodder, J., Heeren, B., Jeuring, J.: Providing {Hints}, {Next} {Steps} and
  {Feedback} in a {Tutoring} {System} for {Structural} {Induction}. Electronic
  Proceedings in Theoretical Computer Science  \textbf{313},  17--34 (Feb
  2020). \doi{10.4204/EPTCS.313.2}

\bibitem{miller2012beyond}
Miller, B.N., Ranum, D.L.: Beyond pdf and epub: Toward an interactive textbook.
  In: Proceedings of the 17th ACM Annual Conference on Innovation and
  Technology in Computer Science Education. pp. 150--155. ITiCSE '12, ACM, New
  York, NY, USA (2012). \doi{10.1145/2325296.2325335}

\bibitem{mostafavi2017evolution}
Mostafavi, B., Barnes, T.: Evolution of an {Intelligent} {Deductive} {Logic}
  {Tutor} {Using} {Data}-{Driven} {Elements}. International Journal of
  Artificial Intelligence in Education  \textbf{27}(1),  5--36 (Mar 2017).
  \doi{10.1007/s40593-016-0112-1}

\bibitem{mostafavi2015data}
Mostafavi, B., Zhou, G., Lynch, C., Chi, M., Barnes, T.: Data-{Driven} {Worked}
  {Examples} {Improve} {Retention} and {Completion} in a {Logic} {Tutor}. In:
  Conati, C., Heffernan, N., Mitrovic, A., Verdejo, M.F. (eds.) Artificial
  {Intelligence} in {Education}, vol.~9112, pp. 726--729. Springer
  International Publishing, Cham (2015), series Title: Lecture Notes in
  Computer Science

\bibitem{paassen2018continuous}
Paassen, B., Hammer, B., Price, T.W., Barnes, T., Gross, S., Pinkwart, N.: The
  {Continuous} {Hint} {Factory} - {Providing} {Hints} in {Vast} and {Sparsely}
  {Populated} {Edit} {Distance} {Spaces}. Journal of Educational Data Mining
  \textbf{10}(1),  1--35 (Jun 2018)

\bibitem{parsons2006parson}
Parsons, D., Haden, P.: Parson's programming puzzles: A fun and effective
  learning tool for first programming courses. In: Proceedings of the 8th
  Australasian Conference on Computing Education - Volume 52. p. 157–163. ACE
  '06, Australian Computer Society, Inc., AUS (2006)

\bibitem{poulsen2023proof}
Poulsen, S.: Proof blocks: Autogradable scaffolding activities for learning to
  write proofs. Ph.D. thesis (2023)

\bibitem{poulsen2023efficiency}
Poulsen, S., Gertner, Y., Cosman, B., West, M., Herman, G.L.: Efficiency of
  learning from proof blocks versus writing proofs. In: Proceedings of the 54th
  ACM Technical Symposium on Computer Science Education. Association for
  Computing Machinery, New York, NY, USA (2023)

\bibitem{poulsen2021evaluating}
Poulsen, S., Viswanathan, M., Herman, G.L., West, M.: Evaluating proof blocks
  problems as exam questions. In: Proceedings of the 17th ACM Conference on
  International Computing Education Research. pp. 157--168 (2021)

\bibitem{poulsen2022proof}
Poulsen, S., Viswanathan, M., Herman, G.L., West, M.: Proof blocks:
  Autogradable scaffolding activities for learning to write proofs. In:
  Proceedings of the 27th ACM Conference on Innovation and Technology in
  Computer Science Education V. 1. Association for Computing Machinery, New
  York, NY, USA (2022), \url{https://doi.org/10.1145/3502718.3524774}

\bibitem{wagner1974string}
Wagner, R.A., Fischer, M.J.: The string-to-string correction problem. Journal
  of the ACM (JACM)  \textbf{21}(1),  168--173 (1974)

\bibitem{weber2001student}
Weber, K.: Student difficulty in constructing proofs: {The} need for strategic
  knowledge. Educational Studies in Mathematics  \textbf{48}(1),  101--119 (Oct
  2001). \doi{10.1023/A:1015535614355}

\bibitem{prairielearn}
West, M., Herman, G.L., Zilles, C.: Prairielearn: Mastery-based online problem
  solving with adaptive scoring and recommendations driven by machine learning.
  In: 2015 ASEE Annual Conference \& Exposition. pp. 26.1238.1--26.1238.14. No.
  10.18260/p.24575, ASEE Conferences, Seattle, Washington (June 2015)

\end{thebibliography}

\end{document}